\newtheorem{theorem}{Theorem}
\title{CFR-MIX: Solving Imperfect Information Extensive-Form Games \\ with Combinatorial Action Space}
\author{
Shuxin Li$^1$ %\footnote{Contact Author}
\and
Youzhi Zhang$^{2}$\thanks{Equal contribution.}\and 
Xinrun Wang$^{1}$\footnotemark[1] \and 
Wanqi Xue$^1$ \And
Bo An$^1$\\
\affiliations
$^1$School of Computer Science and Engineering, Nanyang Technological University, Singapore\\
$^2$Department of Computer Science, Dartmouth College, USA\\
\emails
\{shuxin.li, xinrun.wang, boan\}@ntu.edu.sg,
youzhi.zhang@dartmouth.edu,
wanqi001@e.ntu.edu.sg
}
\begin{document}

\maketitle

\begin{abstract}
In many real-world scenarios, a team of agents coordinate with each other to compete against an opponent. The challenge of solving this type of game is that the team's joint action space grows exponentially with the number of agents, which results in the inefficiency of the existing algorithms, e.g., Counterfactual Regret Minimization (CFR). To address this problem, we propose a new framework of CFR: \textbf{CFR-MIX}. Firstly, we propose a new strategy representation that represents a joint action strategy using individual strategies of all agents and a consistency relationship to maintain the cooperation between agents. To compute the equilibrium with individual strategies under the CFR framework, we transform the consistency relationship between strategies to the consistency relationship between the cumulative regret values. Furthermore, we propose a novel decomposition method over cumulative regret values to guarantee the consistency relationship between the cumulative regret values. Finally, we introduce our new algorithm CFR-MIX which employs a mixing layer to estimate cumulative regret values of joint actions as a non-linear combination of cumulative regret values of individual actions. Experimental results show that CFR-MIX outperforms existing algorithms on various games significantly. 
\end{abstract}

\section{Introduction}
Extensive-form games provide a versatile framework capable of representing multiple agents, imperfect information, and stochastic events. Now there are many researches about solving two-player zero-sum extensive-games, such as computing Nash equilibria by linear programs ~\cite{shoham2008multiagent}, double oracle algorithms ~\cite{mcmahan2003planning,jain2011double}, and CFR ~\cite{zinkevich2008regret}. Meanwhile, these scalable algorithms have achieved many accomplishments. Double oracle algorithms ~\cite{mcmahan2003planning,jain2011double} have subsequently been applied to real-world attacker-defender scenarios. Heads-up limit hold’em poker was essentially solved ~\cite{bowling2015heads}. Later, significant progress has been made for heads-up no-limit hold’em poker based on CFR, such as Libratus ~\cite{brown2017libratus} and DeepStack ~\cite{moravvcik2017deepstack}. 

CFR is one of the most popular algorithms to solve imperfect-information extensive-form games which is an iterative algorithm to approximate a Nash equilibrium with repeated self-play between two regret-minimizing algorithms. Some sampling-based CFR variants ~\cite{lanctot2009monte,gibson2012generalized,lanctot2013monte} are proposed to solve large games effectively. Nowadays, neural network function approximation is applied to CFR to solve larger games. Deep CFR~\cite{brown2019deep}, Single Deep CFR~\cite{steinberger2019single} and Double Neural CFR~\cite{li2019double} are algorithms aiming to approximate CFR using deep neural networks. 

Here, we focus on a special type of two-player imperfect-information extensive-form zero-sum games, \textit{Team Adversary Games}, in which a team of cooperative agents plays against an adversary and all agents in the team share one utility function. This game model captures many real-world scenarios, such as many policemen coordinate with each other to catch an attacker ~\cite{basilico2017coordinating}. The challenge of solving this type of games is that the size of the team's joint action space is exponential with the number of agents. For example, if the number of agents is $8$ and every agent has $10$ actions, then the team's action space will be $10^8$. Therefore, it is impractical to solve this type of game with existing algorithms. CFR and its sampling-based variants use the tabular-form to record the joint action strategy which results in impracticality due to the limited memory. For algorithms using deep neural networks, e.g., Deep CFR and Double Neural CFR, it is ineffective to train the joint action strategy over the large action space. One way to avoid computing the joint action strategy is to let each agent compute its policy independently. However, this approach cannot promote the cooperative interaction between agents in the team and there is no theoretical guarantee to converge to the equilibrium in multi-player games \cite{abou2010using}.

To solve the \emph{exponential combinatorial action space problems} in Team-Adversary Games, we propose a new framework of CFR: \textbf{CFR-MIX}. Firstly, inspired by the idea of centralized training with decentralized execution \cite{rashid2018qmix}, we propose a novel strategy representation that represents the team's joint action strategy using the individual strategy of each agent. We also define a consistency relationship between these two strategy representations to maintain the cooperation among agents in the team, which guarantees that the equilibrium with new strategy representation is a special team-maxmin equilibrium. To compute Nash equilibrium with individual strategy representation under the CFR framework, we transform the consistency relationship between strategies to the consistency relationship between the cumulative regret values of joint actions and the cumulative regret values of individual actions. Furthermore, a novel decomposition method over cumulative regret values is proposed to guarantee the consistency relationship between the cumulative regret values. To implement the decomposition method, CFR-MIX employs a mixing layer that estimates cumulative regret values of joint actions as a non-linear combination of cumulative regret values of individual actions. To further improve the performance, the parameter sharing technique is applied among all agents in the team, which reduces the network parameters dramatically. Finally, experimental results show that CFR-MIX outperforms state-of-the-art algorithms significantly on games in different domains.
\section{Background}
\noindent \textbf{Extensive-Form Game} An Extensive-Form Game (EFG) ~\cite{shoham2008multiagent} can be formulated by a tuple ($N, H, A, P, \mathcal{I}, u$). $N = \{1,...,n\}$ is a set of players. $H$ is a set of histories (i.e., the possible action sequences). The empty sequence $\emptyset$ which is the root node of game tree is in $H$, and every prefix of a sequence in $H$ is also in $H$. $Z \subset H$ is the set of the terminal histories. $A(h) = \{a:(h,a) \in H\}$ is the set of available actions at a non-terminal history $h \in H$. $P$ is the player function. $P(h)$ is the player who takes an action at the history $h$, i.e., $P(h) \mapsto P \cup \{c\}$. $c$ denotes the ``chance player'', which represents stochastic events outside of the players' control. If $P(h) = c$ then chance determines the action taken at history $h$. Information sets $\mathcal{I}_i$ form a partition over histories $h$ where player $i \in \mathcal{N}$ takes action. Therefore, every information set $I_i \in \mathcal{I}_i$ corresponds to one decision point of player $i$ which means that $P(h_1) = P(h_2)$ and $A(h_1) = A(h_2)$ for any $h_1, h_2 \in I_i$. For convenience, we use $A(I_i)$ to represent the set $A(h)$ and $P(I_i)$ to represent the player $P(h)$ for any $h \in I_i$. For each player $i \in N$, a utility function is a mapping $u_i: Z \rightarrow \mathbb{R}$.
% Define $\Delta_{u,i} = \max_z u_i(z)-\min_z u_i(z)$ to be the range of utilities to player $i$. 
% \shuxin{If all players can recall their previous actions and the information sets, the game is said to be perfect recall.}

A player's behavior strategy $\sigma_i$ is a function mapping every information set of player $i$ to a probability distribution over $A(I_{i})$. A strategy profile $\sigma$ consists of a strategy for each player $\sigma_1, \sigma_2, ...$, with $\sigma_{-i}$ referring to all the strategies in $\sigma$ except $\sigma_i$. Let $\pi^{\sigma}(h)$ be the reaching probability of history $h$ if players choose actions according to $\sigma$. Given a strategy profile $\sigma$, the overall value to player $i$ is the expected payoff of the resulting terminal node, $u_i(\sigma) = \sum_{h \in Z}\pi^\sigma(h)u_i(h)$.

In this paper, we consider a type of imperfect-information extensive-form games, \emph{Team-Adversary Game}, that consists of an adversary $\mathbb{V}$ and one team $\mathbb{T}$. The team is formed by multiple agents $R = \{1, ..., n\}$. Every agent $i$ has its own available action set $A_{\mathbb{T}_i}$. Therefore, the team's action set includes all joint actions that are the combinations of all agents' individual actions, i.e., $A_\mathbb{T} = \times_{i \in R}A_{\mathbb{T}_i}$. Here, we focus on zero-sum games, i.e., $u_\mathbb{T}(z) = -u_\mathbb{V}(z)$ for $\forall z \in Z$. 
Nash equilibrium ~\cite{nash1950equilibrium} is adopted as our solution concept which is a strategy profile such that no player can get more reward by switching to a different strategy unilaterally. Formally, in Team-Adversary Games, a Nash Equilibrium (NE) is a strategy profile $\sigma$ where
\begin{align}
u_{\mathbb{V}(\mathbb{T})}(\sigma) \geq \max\nolimits_{\sigma'_{\mathbb{V}(\mathbb{T})} \in \sum_{\mathbb{V}(\mathbb{T})}} u_{\mathbb{V}(\mathbb{T})}(\sigma'_{\mathbb{V}(\mathbb{T})}, \sigma_{\mathbb{T}(\mathbb{V})}) \nonumber
\end{align}
An approximation of a Nash Equilibrium or $\epsilon$-Nash equilibrium is a strategy profile $\sigma$ where
\begin{align}
    u_{\mathbb{V}(\mathbb{T})}(\sigma) + \epsilon \geq \max\nolimits_{\sigma^{'}_{\mathbb{V}(\mathbb{T})} \in \sum_{\mathbb{V}(\mathbb{T})}} u_{\mathbb{V}(\mathbb{T})}(\sigma^{'}_{\mathbb{V}(\mathbb{T})}, \sigma_{\mathbb{T}(\mathbb{V})}) \nonumber
\end{align}
\noindent \textbf{Counterfactual Regret Minimization} \textit{CFR} is a family of iterative algorithms that are the most popular approach to approximately solve large imperfect-information games ~\cite{zinkevich2008regret}. To define the concept, first consider repeatedly playing games. Let $\sigma^t_i$ be the strategy used by player $i$ on round $t$. We define $u_i(\sigma, h)$ as the expected utility of player $i$ given that the history $h$ is reached and then all players play according to strategy $\sigma$ from that point on. Let's define $u_i(\sigma, h\cdot a)$ as the expected utility of player $i$ given that the history $h$ is reached and then all players play according to strategy $\sigma$ except that player $i$ selects action $a$ in the history $h$. Formally, $u_i(\sigma,h)=\sum_{z \in Z}\pi^\sigma(h,z)u_i(z)$ and $u_i(\sigma, h\cdot a)=\sum_{z \in Z}\pi^\sigma(h \cdot a,z)u_i(z)$.

The \textit{counterfactual value} $u_i^\sigma(I)$ is the expected value of an information set $I$ given that player $i$ tries to reach it. This value is the weighted average of the value of each history in an information set. The weight is proportional to the contribution of all players other than $i$ to reach each history. Thus, $u_i^\sigma(I) = \sum_{h\in I} \pi^\sigma_{-i}(h)\sum_{z \in Z}\pi^\sigma(h,z)u_i(z).$
% \begin{align}
% u_i^\sigma(I) = \sum_{h\in I} \pi^\sigma_{-i}(h)\sum_{z \in Z}\pi^\sigma(h,z)u_i(z).  \nonumber
% \sum_{h\in I} \pi^\sigma_{-i}(h)u_i(\sigma,h) = \sum_{h\in I} \pi^\sigma_{-i}(h)\sum_{z \in Z}\pi^\sigma(h,z)u_i(z).  \nonumber
% \end{align}

For any action $a \in A(I)$, the counterfactual value of an action $a$ is $u_i^\sigma(I,a)= \sum_{h\in I} \pi^\sigma_{-i}(h)\sum_{z \in Z}\pi^\sigma(h \cdot a,z)u_i(z)$.
The \textit{instantaneous regret} for action $a$ in information set $I$ on iteration $t$ is $r^t(I,a)=u_{P(I)}^{\sigma^t}(I,a)-u_{P(I)}^{\sigma^t}(I)$. The cumulative regret for action $a$ in $I$ on iteration $T$ is $R^T(I,a) =\sum_{t=1}^{T}r^t(I,a)$. 
% Let $R^T_{+}(I,a)=\max(R^T(I,a),0)$ be the positive portion of cumulative regret because we often mostly concern about cumulative regret when it is positive. Actually, the regret for player $i$ in the entire game is $R^T_i = \max_{\sigma^{'}_i \in \sum_i} \sum_{t \in T}(u_i(\sigma^{'}_i,\sigma^t_{-i})-u_i(\sigma^{t}_i,\sigma^t_{-i}))$. 
In CFR, players use \textit{Regret Matching} to pick a distribution over actions in an information set in proportion to the positive cumulative regret on those actions. Formally, on iteration $T+1$, player $i$ selects actions $a\in A(I)$ according to probability
\begin{equation}
\sigma^{T+1}(I,a)=
\begin{cases}
\frac{R^T_{+}(I,a)}{\sum_{b \in A(I)}R^T_{+}(I,b)} & \text{if $\sum_{b \in A(I)}R^T_{+}(I,b) >0$} \\
\frac{1}{|A(I)|}& \text{otherwise} 
\end{cases}\nonumber
\end{equation}
where $R^T_{+}(I,a)=\max(R^T(I,a),0)$ because we often mostly concern about cumulative regret when it is positive.
% The regret of information set in iteration $T$ is $R^T(I) = \max_a(R^T_{+}(I,a))$. 
If a player plays according to CFR in every iteration, then on iteration $T$, $R^T(I) \leq \Delta_i\sqrt{|A_i|}\sqrt{T}$ where $\Delta_i= \max_z u_i(z)-\min_zu_i(z)$ is the range of utility of player $i$. Moreover, 
$R^T_i \leq \sum_{I\in \mathcal{I}_i}R^T(I) \leq |\mathcal{I}_i|\Delta_i\sqrt{|A_i|}\sqrt{T}$. 
Therefore, as $T \rightarrow \infty$, $\frac{R^T_i}{T} \rightarrow 0$. In two-player zero-sum games, if both players' average regret  $\frac{R^T_i}{T} \leq \epsilon$, then their average strategies $(\overline{\sigma}^T_1,\overline{\sigma}^T_2)$ form a $2\epsilon$-equilibrium~\cite{waugh2009abstraction}. 

To solve large games effectively, some sampling-based CFR algorithms are proposed, such as external sampling, outcome sampling ~\cite{lanctot2009monte}, probe sampling ~\cite{gibson2012generalized} and other reduce variance sampling algorithms~\cite{schmid2019variance,steinberger2020dream}. External sampling algorithm has the lowest variance. However, it runs slowly because it traverses all actions of one player every iteration. Although outcome sampling algorithm runs faster than external sampling algorithm, it has large variance which influences the convergence rate. Compared to outcome sampling algorithm, probe sampling algorithm attempts to reduce variance by replacing ``zeroed-out'' counterfactual values of non-sampled actions with closer estimates of the true counterfactual values. In this paper, we adopt the probe sampling algorithm to traverse the game tree to collect regret value data.
 
\section{Strategy Representation}
In this paper, we focus on solving the Team-Adversary Games in which a team of agents plays against an adversary and all agents share one utility function. Note that the team's joint action space grows exponentially with the number of team players. Thus, it is impractical to solve this type of games with existing approaches due to the large action space. To address this problem, we propose a novel strategy representation, which significantly reduces the strategy space. 
\subsection{Individual Strategy Representation}
In this section, we introduce a new strategy representation to denote the team's joint action strategy using the individual action strategies of all the team players. Different from the exponential combinatorial action space, the new individual action space of all the agents is linear with the number of agents. It is worth noting that all agents share the same history of the team $\mathbb{T}$, i.e., the player function can be extended as $P(h) = \mathbb{T} = \{1, 2, 3,..., n\}$. In other words, all the agents share the same information set of the team $\mathbb{T}$. Therefore, the new strategy representation of the team is defined as $f_\mathbb{T} = (\sigma_1, \sigma_2, ..., \sigma_n)$, where $\sigma_i$ is agent $i$'s strategy which maps each information set of player $\mathbb{T}$ to a probability distribution over agent $i$'s action space $A_{\mathbb{T}_i}$. For clarity, we call this new representation as \textit{individual strategy representation} and denote the old team strategy representation by \textit{joint strategy representation}. Given a strategy profile $\sigma = (\sigma_{\mathbb{V}}, f_{\mathbb{T}}) = (\sigma_{\mathbb{V}}, \sigma_1,...,\sigma_n)$, the expected payoff of every agents is the same one $u_a(\sigma)=\sum_{h\in Z}u_\mathbb{T}(h)\pi^{\sigma}(h)$. $\pi^{\sigma}(h)$ is the reaching probability of history $h$ if team players follow the strategy profile $\sigma$ which means each agent $i$ chooses its action according to its strategy $\sigma_i$ independently.
\subsection{Strategy Consistency}
Despite that the individual strategy representation reduces the search space efficiently, it cannot represent all join action strategy space. To clarify the difference between these two strategy representations, we first propose a consistency relationship between the two strategy representations which can be defined as
\begin{align}
     \boldsymbol{\sigma}_\mathbb{T}(I, \boldsymbol{a}) = \sigma_1(I, a_1)\sigma_2(I, a_2)...\sigma_n(I, a_{n}), \label{eq: strategy_consistency}
\end{align}
where $\boldsymbol{\sigma}_\mathbb{T}(\cdot)$ and $\sigma_i(\cdot)$ are the probability of a joint action and the probability of an individual action, respectively. In short, 
% the probability of each joint action equals the product of probabilities of every individual agent's action which means that 
the joint strategy representation can be decomposed into the individual strategy representation.  
\begin{figure}
    \centering
    \includegraphics[width=0.9\columnwidth]{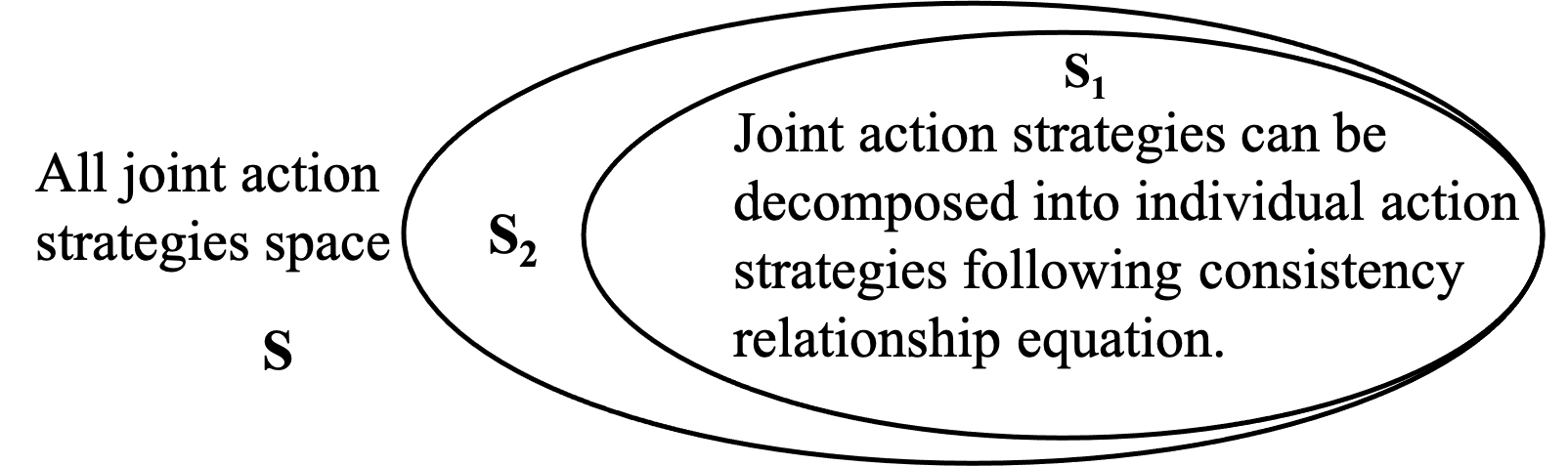}
    \caption{The relationship between joint action strategy space and individual action strategy space}
    \label{fig:my_label}
\end{figure}
Note that the consistency relationship divides the whole joint action space $S$ into two parts $S_1, S_2$ (Fig.~\ref{fig:my_label}). We set the strategy space of adversary as $S_V$ and $(S, S_V)$ forms a two-player zero-sum game. Under our new strategy representation, the strategy space of the team becomes $S_1$, and $(S_1, S_V)$ forms a two-player zero-sum game with the consistency relationship holds which is the game we focus on in this paper. The difference between these two games is that in game $(S_1, S_V)$, each team's joint strategy can be decomposed into the agents' individual strategies. Obviously, there are some joint strategies in $S$ that cannot be decomposed ($S_2$). For example, a joint action strategy $\boldsymbol{\sigma}=(0.5, 0, 0, 0.5)$ is based on the joint action set $\{(a_1, b_1),(a_1, b_2), (a_2, b_1), (a_2, b_2)\}$. We cannot get the corresponding individual strategy representation following Eq.~(\ref{eq: strategy_consistency}). Therefore, if the team's NE strategy of $(S, S_V)$ is in the strategy space $S_1$, the NE of $(S_1, S_V)$ equals the NE of $(S, S_V)$ (See Theorem~\ref{theorem}). Otherwise, the team's NE strategy of $(S_1, S_V)$ is a special team-maxmin equilibrium strategy that can be decomposed into individual strategy which is different from the NE of $(S, S_V)$. In this case, the team may lose some utility after reducing strategy space. Because the team's joint strategy always get not lower utility than the team's strategy which can be decomposed. However, experimental results show that solving the game $(S_1, S_V)$ can get good strategies faster than solving the large game $(S, S_V)$. The reason could be that the exponential action space significantly influences the computing process of the join action strategy significantly. 
\begin{theorem} 
\label{theorem}
Under the new strategy representation, the Nash Equilibrium strategy profile between the adversary and the team keeps unchanged if Eq.~(\ref{eq: strategy_consistency}) holds for the team's Nash equilibrium strategy. \footnote{The complete proof is in the Appendix.}
\end{theorem}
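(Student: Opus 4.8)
The plan is to read this as a statement about restricting one player's strategy set in a zero-sum game. Write $G=(S,S_V)$ for the original game and $G_1=(S_1,S_V)$ for the reduced game, where $S_1\subseteq S$ consists of the team joint strategies that factor as a product in the sense of Eq.~(\ref{eq: strategy_consistency}); the inclusion holds because a product of per-agent distributions over $A_{\mathbb{T}_i}(I)$ is itself a legal distribution over the joint action set $A_\mathbb{T}(I)$. Since a product is not preserved under convex combinations, $S_1$ is not convex, so I would \emph{not} treat $G_1$ as a two-player zero-sum game but as an $(n+1)$-player game whose players are the $n$ team members (all with payoff $u_\mathbb{T}$) and the adversary (payoff $-u_\mathbb{T}$), reading ``Nash equilibrium of $G_1$'' in that sense. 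Let $v$ denote the (minimax) value of $G$. The goal then splits in two: (i) the assumed decomposable equilibrium of $G$, re-expressed through individual strategies, is still a Nash equilibrium of $G_1$ and keeps payoff $v$; and (ii) no Nash equilibrium of $G_1$ gives the team more than $v$, so this re-expressed profile is the team's optimal equilibrium in $G_1$ — i.e., from both players' viewpoint the equilibrium is unchanged.

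For (i): let $(\sigma^*_\mathbb{V},\sigma^*_\mathbb{T})$ be a Nash equilibrium of $G$ with $\sigma^*_\mathbb{T}\in S_1$, factored as $(\sigma^*_1,\dots,\sigma^*_n)$ by Eq.~(\ref{eq: strategy_consistency}). I would check the best-response conditions of $G_1$ one family at a time. The adversary's condition is immediate: its opponent's realized distribution is $\sigma^*_\mathbb{T}$ in both games and its own set $S_V$ is the same, so ``$\sigma^*_\mathbb{V}$ best-responds to $\sigma^*_\mathbb{T}$'' in $G$ is verbatim the adversary's condition in $G_1$. For team member $i$, any unilateral deviation $\sigma_i$ produces the joint strategy $\sigma^*_1\cdots\sigma_i\cdots\sigma^*_n\in S_1\subseteq S$, and since $\sigma^*_\mathbb{T}$ best-responds to $\sigma^*_\mathbb{V}$ over \emph{all} of $S$ we get $u_\mathbb{T}(\sigma^*_\mathbb{V},\sigma^*_1\cdots\sigma_i\cdots\sigma^*_n)\le u_\mathbb{T}(\sigma^*_\mathbb{V},\sigma^*_\mathbb{T})$, so $\sigma^*_i$ best-responds to $(\sigma^*_\mathbb{V},\sigma^*_{-i})$. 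Hence $(\sigma^*_\mathbb{V},\sigma^*_1,\dots,\sigma^*_n)$ is a Nash equilibrium of $G_1$ with payoff $u_\mathbb{T}(\sigma^*_\mathbb{V},\sigma^*_\mathbb{T})=v$, and the induced adversary--team profile is exactly the one from $G$.

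For (ii): for any Nash equilibrium $(\hat\sigma_\mathbb{V},\hat\sigma_1,\dots,\hat\sigma_n)$ of $G_1$ set $\hat\sigma_\mathbb{T}=\hat\sigma_1\cdots\hat\sigma_n\in S$; since $\hat\sigma_\mathbb{V}$ minimizes $u_\mathbb{T}(\cdot,\hat\sigma_\mathbb{T})$ over the unchanged $S_V$, $u_\mathbb{T}(\hat\sigma_\mathbb{V},\hat\sigma_\mathbb{T})=\min_{\sigma_V\in S_V}u_\mathbb{T}(\sigma_V,\hat\sigma_\mathbb{T})\le\max_{\sigma_\mathbb{T}\in S}\min_{\sigma_V}u_\mathbb{T}=v$. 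With (i) this shows $v$ is the best equilibrium payoff the team can attain in $G_1$ — equivalently, the team-maxmin value $\max_{\sigma_\mathbb{T}\in S_1}\min_{\sigma_V}u_\mathbb{T}$, which is always $\le v$, equals $v$ under the hypothesis — so reducing the strategy space costs the team nothing and the equilibrium is unchanged. The main obstacle, and the reason I would be careful about the phrasing, is that $G_1$ can additionally carry \emph{spurious}, lower-value equilibria that are not equilibria of $G$ (mutually indifferent, ``miscoordinated'' team profiles), precisely because $S_1$ is non-convex and coordinated deviations are unavailable to individual team members; so the claim is not an equality of equilibrium \emph{sets}, only the persistence of the decomposable equilibrium with unchanged value and its optimality among the team's equilibria in $G_1$. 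Everything else is routine once the factorization from Eq.~(\ref{eq: strategy_consistency}) is written down and the per-agent best-response inequalities are traced back to the already-assumed best-response property of $\sigma^*_\mathbb{T}$ in $G$.
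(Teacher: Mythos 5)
Your proposal is correct, and its engine is the same as the paper's: the decomposable equilibrium of $(S,S_V)$ survives passage to the reduced game because every deviation available there is also available in $S$, where $\boldsymbol{\sigma}_\mathbb{T}$ is already a best response. The difference is in how the reduced game is read and how much is actually verified. The paper keeps the team as a single player with strategy set $S_1$ and simply substitutes $\boldsymbol{\sigma}_\mathbb{T}=\sigma_1\cdots\sigma_n$ into the two best-response inequalities, restricting the team's max from $S$ to $S_1$; it then asserts ``and vice versa'' without argument. You instead treat $(S_1,S_V)$ as an $(n+1)$-player game (justified by the non-convexity of $S_1$), check the adversary's and each agent's unilateral best-response conditions separately, and add a value argument (your part (ii)) showing no equilibrium of the reduced game can give the team more than the value $v$ of the original game. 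What your route buys is precision exactly where the paper is loose: you correctly observe that the reduced game may carry spurious, lower-value ``miscoordinated'' equilibria, so the honest conclusion is persistence of the decomposable equilibrium with unchanged value and its optimality among the team's equilibria, not an identity of equilibrium sets — which is a more defensible reading of ``keeps unchanged'' than the paper's unqualified converse. The paper's version buys brevity and stays within the two-player zero-sum framing used elsewhere in the text, at the cost of leaving the converse direction and the non-convexity issue unaddressed.
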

\section{CFR-MIX}
Now we introduce a novel algorithm CFR-MIX to compute an NE under Team-Adversary game with strategy consistency $(S_1,S_V)$. Firstly, we introduce the consistency relationship between cumulative regret values which is transformed from Eq.~(\ref{eq: strategy_consistency}). Secondly, a novel decomposition method over cumulative regret values is proposed based on the consistency relationship. Then, we introduce our algorithm CFR-MIX in which the decomposition method is implemented by a mixing layer. Finally, we provide theoretical analysis.
\subsection{Regret Consistency}
% From the above description, we find that the consistency relationship relationship between strategy representations is no-linear. Therefore, the LP algorithm cannot perform effectively due to these no-linear constraints.
For extensive-form games, one of the most popular algorithms to compute NEs is CFR-based algorithm which leverages regret-matching to compute strategy according to cumulative regret values. In other words, the probability of each action is related to the cumulative regret value of the action. 
% According to Eq.~(\ref{eq: strategy_consistency}), to avoid computing the cumbersome joint strategy representation directly, we can compute the individual strategy representation instead. 
Therefore, to compute the individual strategy for each agent, we need to know the cumulative regret values of its actions explicitly. However, all the agents share the same team utility function and we can only get the regret values of joint actions. To this end, we propose a decomposition method over cumulative regret values. Before describing the decomposition method, we first transform the consistency relationship between strategies into the consistency relationship between cumulative regret values which can be defined as follows:
% Here, we adopt CFR-based algorithm to compute equilibrium under new strategy representation. Inspired by the idea "centralize training and decentralize execution", we intend to propose a decomposition method over cumulative regret value.  Before we design a decomposition method, we first construct the \textit{consistency} between cumulative regret value of joint action and the cumulative regret values of individual actions which can be formed as:
% \shuxin{Linear Programming is an effective method to compute equilibrium. However, our strategy decomposition can not be implemented by linear programming due to the no-linear constrains. CFR-based methods are popular algorithms to compute NE for two-player zero-sum games. These methods compute strategy using regret-matching based on cumulative regret values. To combine our new strategy representation with CFR framework and guarantee the consistency relationship between new strategy representation and join-action strategy representation, we intend to propose a decomposition method over cumulative regret value. Before we design a decomposition method, we first construct the \textit{consistency} between cumulative regret value of joint action and the cumulative regret values of individual actions which can be formed as:}
\begin{align}
    \forall \boldsymbol{a}, \quad \boldsymbol{\pi}_\mathbb{T}(R_{tot}(I, \boldsymbol{a})) & = \prod\nolimits_{i=1}^{n}\pi_i(R_{i}(I, a_i)),
    \label{eq: cumulative regret_consistency}
\end{align}
where $\pi_i(\cdot)$ and $\boldsymbol{\pi}_\mathbb{T}(\cdot)$ are the probabilities of individual action and joint action, i.e., $\pi_i(R(I,a)) = \sigma_i(I,a)$. $\boldsymbol{a} = (a_1, ..., a_{n})$ is the joint action composed by every agent's action. $R_{tot}$ and $R_i$ are the cumulative regret values of a joint action and an individual action, respectively. Since these probabilities are computed using regret-matching based on cumulative regret value, we can reformulate this consistency relationship as
\begin{align}
    \frac{R_{tot}(I, \boldsymbol{a})_+}{\sum\nolimits_{\forall \boldsymbol{a}^{'}}R_{tot}(I, \boldsymbol{a}^{'})_+} = \prod\nolimits_{i=1}^{n}\frac{R_i(I, a_i)_+}{\sum\nolimits_{b\in A_i(I)}R_i(I, b)_+} \label{eq.regret}
\end{align}
when $\sum\nolimits_{\forall \boldsymbol{a}^{'}}R_{tot}(I, \boldsymbol{a}^{'})_+ > 0$ and $\sum\nolimits_{b\in A_i(I)}R_i(I, b)_+ > 0$. If $\sum\nolimits_{\forall \boldsymbol{a}^{'}}R_{tot}(I, \boldsymbol{a}^{'})_+ \leq 0$, the cumulative regret values of individual actions need to satisfy:
$\sum\nolimits_{b\in A_i(I)}R_i(I, b)_+ \leq 0 $, $\forall i\in\{1,..n\}$.
The consistency relationship guarantees that the probability of a joint action equals the product of the probabilities of all individual actions selected by each player. 
% Therefore, this consistency relationship between strategies to the consistency relationship between the cumulative regret values. 

% For consistency, we need to ensure that a global $\boldsymbol{\pi}_\mathbb{T}(\cdot)$ performed on $R_{tot}$ yields the same result as a product value of individual $\pi_i(\cdot)$ operations performed on each $R_i$. 
% Next, we will propose a decomposition method over cumulative regret to guarantee the cumulative regret consistency relationship.
% Recall that the relationship between the probability of team's joint action  and the probabilities of individual resources' actions is:
% \begin{align}
%      \boldsymbol{\sigma}_\mathbb{T}(I, \boldsymbol{a}) = \prod_{i=1}^{n}\sigma_i(a_{i}) && \forall \boldsymbol{a} = (a_1, ...,a_{n})
%     \label{equa: consistency}
% \end{align}
% This subsection will provide a decomposition relationship between cumulative regret values of a joint action and individual actions. We know that CFR-based algorithms compute strategy using regret matching based on the cumulative regret value. The definition of regret matching is introduced in the background part.
% \begin{align}
% \sigma(I, a_i) = \left\{
%     \begin{array}{cl}
%         \frac{R(I, a_i)_+}{\sum\limits_{a\in A(I)}R(I, a)_+}
%         &\qquad{\sum\limits_{a\in A(I)}R(I, a)_+ > 0} \\
%         \frac{1}{|A(I)|} &\qquad{otherwise}
%     \end{array}\right.
% \end{align}
% For product-form decomposition method, we construct the following equation to support the decomposition method. 
\subsection{Product-Form Decomposition}
To guarantee these consistency relationships (Eqs.~(\ref{eq: strategy_consistency}-\ref{eq.regret})), we propose a product-form decomposition method over cumulative regret values which can be defined as follows:
\begin{align}
    \forall \boldsymbol{a}, \quad R_{tot}(I, \boldsymbol{a}) = \prod\nolimits_{i=1}^{n}R_i(I, a_i). \label{decomposition}
\end{align}
Here, we set $R_{tot} = 0$ if $R_{tot}\leq 0$ following the setting of regret-matching+ ~\cite{tammelin2015solving} which is a regret-minimizing algorithm that operates very similarly to regret-matching. Compare with the regret matching which ignores actions that have an cumulative negative regret, regret-matching+ actively resets any cumulative negative regret back to zero. In the remaining paper, we consider that all cumulative regret values are non-negative.
\begin{theorem}
If product-form decomposition (Eq.~(\ref{decomposition})) holds, the consistency relationship between the joint action strategy and the individual strategy (Eq.~(\ref{eq: strategy_consistency})) can be guaranteed.
\end{theorem}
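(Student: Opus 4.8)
The plan is to derive the strategy consistency Eq.~(\ref{eq: strategy_consistency}) directly from the regret-matching$^+$ rule applied on top of the product-form decomposition (Eq.~(\ref{decomposition})), reducing the whole claim to a single algebraic identity about products of sums.

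First I would write out the team's joint strategy produced by regret matching over $R_{tot}$. Since under regret-matching$^+$ all cumulative regrets are non-negative, in the nondegenerate case (all relevant denominators positive) this is $\boldsymbol{\sigma}_\mathbb{T}(I,\boldsymbol{a}) = R_{tot}(I,\boldsymbol{a}) \big/ \sum\nolimits_{\boldsymbol{a}'} R_{tot}(I,\boldsymbol{a}')$. Substituting Eq.~(\ref{decomposition}) turns the numerator into $\prod_{i=1}^{n} R_i(I,a_i)$ and the denominator into $\sum\nolimits_{\boldsymbol{a}' \in \times_{i\in R} A_i(I)} \prod_{i=1}^{n} R_i(I,a_i')$, where the sum ranges over all joint actions.

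The key step is the distributive identity
\begin{align}
\sum\nolimits_{\boldsymbol{a}' \in \times_{i\in R} A_i(I)} \prod\nolimits_{i=1}^{n} R_i(I,a_i') = \prod\nolimits_{i=1}^{n} \Big( \sum\nolimits_{b \in A_i(I)} R_i(I,b) \Big), \nonumber
\end{align}
which I would prove by induction on $n$ (the base case $n=1$ is trivial, and the inductive step peels off the last agent). With this, the denominator factors and cancels componentwise, yielding $\boldsymbol{\sigma}_\mathbb{T}(I,\boldsymbol{a}) = \prod_{i=1}^{n} R_i(I,a_i)/\sum_{b\in A_i(I)} R_i(I,b) = \prod_{i=1}^{n} \sigma_i(I,a_i)$, which is exactly Eq.~(\ref{eq: strategy_consistency}); the same computation incidentally re-establishes the regret-level consistency Eq.~(\ref{eq.regret}).

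The main obstacle is the degenerate branch in which a denominator vanishes and regret matching falls back to the uniform distribution. Here I would invoke the side condition stated just after Eq.~(\ref{eq.regret})---that $\sum\nolimits_{\boldsymbol{a}'} R_{tot}(I,\boldsymbol{a}')_+ \le 0$ forces $\sum_{b\in A_i(I)} R_i(I,b)_+ \le 0$ for every agent $i$---so that the joint rule and all $n$ individual rules take the uniform branch simultaneously; consistency then follows from the counting identity $|A_\mathbb{T}(I)| = \prod_{i=1}^{n} |A_i(I)|$, which is immediate from $A_\mathbb{T} = \times_{i\in R} A_{\mathbb{T}_i}$, since $1/|A_\mathbb{T}(I)| = \prod_{i=1}^{n} 1/|A_i(I)|$. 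I would also remark that the intermediate situation---some but not all agents having all-zero positive regret---is precisely what that side condition rules out, so the product-form decomposition must be read together with it for the theorem to hold in full generality.
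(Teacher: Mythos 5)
Your proposal is correct and follows essentially the same route as the paper's proof: both reduce the claim to the distributive identity $\sum_{\boldsymbol{a}'}\prod_i R_i(I,a_i') = \prod_i \sum_b R_i(I,b)$ (the paper calls this ``polynomial multiplication'') and then cancel factorwise in the regret-matching ratio. Your additional treatment of the degenerate uniform-fallback branch via the side condition and the counting identity $|A_\mathbb{T}(I)| = \prod_i |A_i(I)|$ is a welcome extra that the paper's proof leaves implicit.
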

\begin{proof}
When $\sum_{\forall \boldsymbol{a}^{'}}R_{tot}(I, \boldsymbol{a}^{'}) > 0$, we can get
\begin{align}
    \sigma(I, \boldsymbol{a}) & =  \frac{R_{tot}(I, \boldsymbol{a})}{\sum\nolimits_{\forall \boldsymbol{a}^{'}}R_{tot}(I, \boldsymbol{a}^{'})} = \frac{\prod\nolimits_{i=1}^{n}R_i(I, a_i)}{\sum\nolimits_{\forall \boldsymbol{a}^{'}}R_{tot}(I, \boldsymbol{a}^{'})},
\end{align}
\begin{align}
      \prod\nolimits_{i=1}^{n} \sigma(I, a_i) & = \prod\nolimits_{i=1}^{n}\frac{R_i(I,a_i)}{\sum\nolimits_{a_i^j\in A_i(I)}R_i(I, a_i^j)} \\
        & = \frac{\prod\nolimits_{i=1}^{n}R_i(I, a_i)}{\prod\nolimits_{i=1}^{n}\sum\nolimits_{a_i^j\in A_i(I)}R_i(I, a_i^j)}.  \nonumber
\end{align}
These equations are derived by the regret matching+ and product-form decomposition equation.
\begin{align}
    &\prod\nolimits_{i=1}^{n}\sum\nolimits_{a_i^j\in A_i(I)}R_i(I, a_i^j) \\
    = &\prod\nolimits_{i=1}^n[R_i(I, a_i^1)+R_i(I, a_i^2)+ ... +
    R_i(I,a_i^{|A_i(I)|})] \nonumber \\
    = &\prod\nolimits_{i=1}^n R_i(I, a_{i}^1) +\prod\nolimits_{i=1}^{n-1}R_i(I, a_{i}^1)R_{n-1}(I, a_{n-1}^2)+ \nonumber \\
    &... + \prod\nolimits_{i=1}^{n}R_i(I, a_{i}^{|A_i(I)|}) \nonumber \\
    = &R_{tot}(I, \boldsymbol{a}_1) + ... +R_{tot}(I, \boldsymbol{a}_{|A_1(I)|...|A_{n}(I)|})  \nonumber \\
    = &\sum\nolimits_{\forall \boldsymbol{a}^{'}}R_{tot}(I, \boldsymbol{a}^{'}). \nonumber
\end{align}
The above derivation is following the rules of polynomial multiplication and product-form decomposition equation. Therefore, the Eq.~(\ref{eq: strategy_consistency}) holds.
% When $\sum\limits_{\forall \boldsymbol{a}^{'}}R_{tot}(I, \boldsymbol{a}^{'}) = 0$, from the equations
% \begin{align}
%     R_{tot}(I, \boldsymbol{a}) = \prod_{i=1}^{n}R_i(I, a_i)&&\boldsymbol{a} = (a_1, a_2, ...,a_{n})
% \end{align}
% , $R_{tot}\geq 0$ and $R_i \geq 0$, we can get that every item is zero. Therefore, the joint action strategy and individual strategies are all random strategies. 
\end{proof}

% \subsection{General Case: Mixed Decomposition}
% We can generalize the product-form decomposition method to a more general form.  
% \begin{align}
%     R_{tot}(I, \boldsymbol{a}) = \prod_{i=1}^{n-1}R_i^{w_i}(I, a_i)&&\boldsymbol{a} = (a_1, a_2, ...,a_{n-1})
% \end{align}

% \begin{algorithm}[t]
% \caption{CFR-MIX framework}
% \begin{algorithmic}[1]
% \label{algo: deepcfr}
% \STATE Initialize regret sum network $R(I, a|\theta_p)$ with $\theta_p$ so that it returns 0 for all inputs for player $p \in \{G, A\}$;
% \STATE Initialize average strategy network $R(I, a|\theta_p)$ for player $p \in \{G, A\}$;
% \STATE Initialize regret memories $M_{r,G}$,$M_{r,A}$ and strategy memory $M_{\pi,G}$, $M_{\pi,A}$.
% \FOR{CFR Iteration t=1 to T}
% \FOR{Traverse k=1 to K}
% \STATE TRVERSE(G, $\theta_G$, $\theta_A$,$M_{r,G}$, $M_{\pi, A}$)
% \STATE TRVERSE(A, $\theta_A$, $\theta_G$,$M_{r,A}$, $M_{\pi, G}$)\\
% \#use sample algorithm to traverse game tree and record regret and strategy into memory
% \ENDFOR
% \STATE Train $\theta_p$ on loss for player $p \in \{G, A\}$ \\ $\mathcal{L} = \mathbb{E}_{(I, \widetilde{r}) \sim M_{r, p}}[\sum_a ((R(\cdot|\theta_p^t)+\widetilde{r})^+-R(\cdot|\theta_p^{t+1}))^2]$
% \STATE Train $\theta_{\pi, p}$ on loss for player $p \in \{G, A\}$ \\ $\mathcal{L} = \mathbb{E}_{(I, \widetilde{\pi}) \sim M_{\pi, p}}[\sum_a ((S(\cdot|\theta_p^t) + \widetilde{\pi})^+- S(\cdot|\theta_p^{t+1}))^2]$
% \ENDFOR
% \RETURN $\theta_{\pi,G}$, $\theta_{\pi,A}$
% \end{algorithmic}
% \end{algorithm}
\begin{figure}[t] 
\centering 

\includegraphics[width=0.46\textwidth]{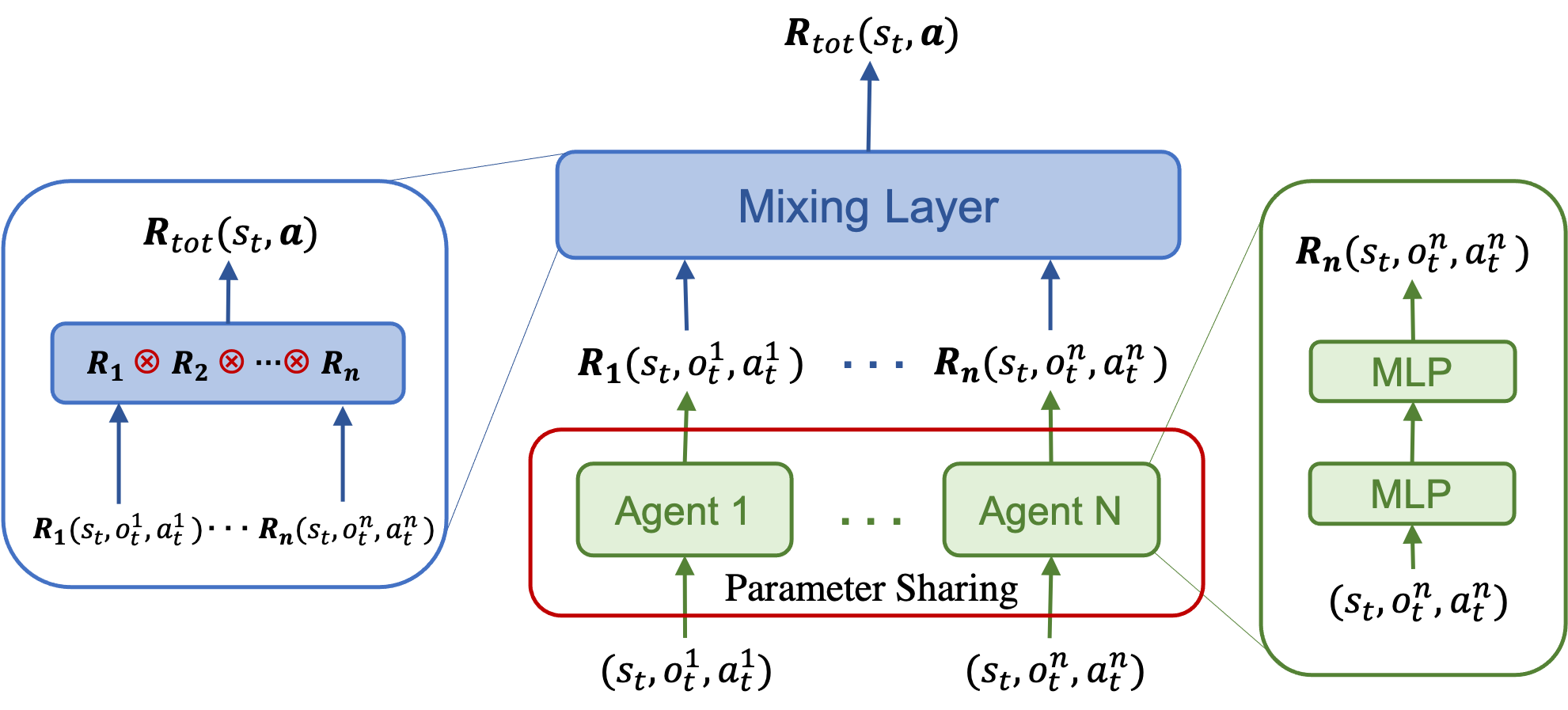} 
 
\caption{Architecture of team's cumulative regret neural network}
% This network takes the infoset, agent's observations and actions as input and outputs the cumulative regret value of the joint action.} 
\label{fig: cfr_mix} 
\end{figure}

\subsection{Mixing Layer}
This section introduces our algorithm, CFR-MIX, which employs a mixing layer to implement the product-form decomposition under the Double Neural CFR framework. The reasons that we do not implement decomposition in tabular CFR framework are described as follows. If we use tabular CFR to compute every cumulative regret value of individual action according to Eq.~(\ref{decomposition}), we will get a nonlinear equation which is difficult to be solved efficiently. For example, in a Team-Adversary Game, there are two agents $R = \{1, 2\}$ in the team. At an information set $I_\mathbb{T}$, agent $1$ has two actions $\{a_{11}, a_{12}\}$ and agent $2$ has two actions $\{a_{21}, a_{22}\}$. We only know the cumulative regret values of all joint actions $R_{tot}(I, \boldsymbol{a})$, where $\boldsymbol{a} \in \{(a_{11}, a_{21}), (a_{11}, a_{22}), (a_{12}, a_{21}), (a_{12}, a_{22})\}$. Then we need to compute the cumulative regret value of each individual action following the equations:
$$ \left\{
\begin{aligned}
R_{tot}(I, (a_{11}, a_{21})) &= & R_1(I, a_{11}) \cdot R_2(I, a_{21})  \\
R_{tot}(I, (a_{11}, a_{22})) &= & R_1(I, a_{11}) \cdot R_2(I, a_{22})  \\
R_{tot}(I, (a_{12}, a_{21})) &= & R_1(I, a_{12}) \cdot R_2(I, a_{21}) \\
R_{tot}(I, (a_{12}, a_{22})) &= & R_1(I, a_{12}) \cdot R_2(I, a_{22}) 
\end{aligned}
\right.
$$
Solving the above equations is a resource intensive task, especially when the game has a large number of agents. Therefore, we adopt to combine CFR with deep neural networks and utilize the great capabilities of neural networks in function approximation properties (e.g., Double Neural CFR).

Our CFR-MIX algorithm is built on the Double Neural CFR framework, considering it can represent cumulative regret explicitly. Specifically, CFR-MIX uses the Probe sampling algorithm to traverse the game tree to collect training data (the data of cumulative regret values and strategies). In the sampling process, when it turns the team player to act, every agent plays according to its strategy and composes their strategies into the joint action strategy which is used to compute regret values only. Because all agents share one team utility function, it computes regret values for all joint actions based on the joint action strategy and joint action values. Then, CFR-MIX trains one cumulative regret value network and one average strategy network for each player.

The key to our method is the new cumulative regret network for the team which is shown in Fig.~\ref{fig: cfr_mix}. To decompose the global cumulative regret values of joint actions, we design a mixing layer as a product operation on the cumulative regret values of all individual actions. The Agent network takes the information sets, agent $i$'s observations and agent $i$'s available actions as input and outputs the cumulative regret value of the agent $i$'s actions. The mixing layer takes all the outputs of the Agent network as input and outputs the cumulative regret value of the corresponding joint action. In this way, the mixing layer guarantees that the consistency between cumulative regret values of individual action and joint action. Therefore, the mixing layer can also guarantee the strategy consistency relationship. When training this cumulative regret network, we use the cumulative regret values of joint actions which are the sum of: (1) the regret values obtained from the sampling process and (2) the estimated cumulative regret values obtained from the cumulative regret network in the last iteration. 

To get the individual strategy of each agent, we only use the Agent network to estimate the cumulative regret value for each agent's action and compute the agent's strategy using regret-matching+ based on the estimated cumulative regret value. Therefore, CFR-MIX only needs to traverse the action spaces of all the agents which are linear with the number of agents. On the contrary, to get the joint strategy, Double Neural CFR needs to traverse the whole joint action space which is exponential with the number of agents. Furthermore, in CFR-MIX, we adopt the parameter sharing technique which means that all agents share the same Agent network. The parameter sharing can reduce the parameters and improve the performance significantly. For the average strategy network, we also only use one network for all agents. The training data of the strategy network are all agents' strategies. However, in the Double Neural CFR algorithm, the training data of the average strategy network are the team strategies defined over all the joint actions. 
% Therefore, CFR-MIX use less memory than Double Neural CFR on representing the same information.

\subsection{Convergence Analysis}
Recall that we focus on a subset of joint action strategy space which can be decomposed into individual action strategies. Therefore, we compute a new equilibrium of game $(S_1, S_V)$ with proposed CFR-MIX and provide a bound of regret under mild conditions based on Deep CFR ~\cite{brown2019deep}. 
\begin{theorem}
Let $T$ denote the number of CFR-MIX iterations, $|A|$ the maximum number of actions at any infoset and $K$ the number of traversals per iteration. Let $L_{\mathcal{R}}^t$ be the average MSE loss for $\mathcal{R}_p(I, a|\theta^t)$ on a sample in $M_{r,p}$ at iteration $t$, and let $L_{\mathcal{R}^*}^t$ be the minimum loss achievable for any function $\mathcal{R}$. Let $L_{\mathcal{R}}^t-L_{\mathcal{R}^*}^t \leq \epsilon_{\boldsymbol{L}}$. If the value memories are sufficiently large and \textbf{ Eq.~(\ref{decomposition}) holds}, then with probability $1-\rho$ total regret of player $p$ at time $T$ is bounded by 
\begin{equation}
    R_p^T \leq (1+\frac{\sqrt{2}}{\sqrt{\rho K}})\Delta |I_p|\sqrt{|A|}\sqrt{T}+4T|I_p|\sqrt{|A|\Delta \epsilon_{\boldsymbol{L}}} \nonumber
\end{equation}
As $T \rightarrow \infty$, the average regret $\frac{R^T_p}{T}$ is bounded by $4|\mathcal{I}_p|\sqrt{|A|\Delta \epsilon_{\boldsymbol{L}}}$
with high probability.
% The total regret of individual player in the team at time $T$ has the following relationship:
% \begin{align}
%     \prod_{i=1}^{n}R_i^T = R_G^T
% \end{align}
\end{theorem}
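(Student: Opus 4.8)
The plan is to reduce the statement to the Deep CFR regret bound (Brown et al.~2019) applied to the reduced game $(S_1,S_V)$, using the product-form decomposition to keep the action-space dependence linear in the individual action counts. First I would observe that, by construction, CFR-MIX runs regret-matching$^+$ on the individual cumulative regret values $R_i(I,a_i)$ emitted by the Agent network, while the mixing layer sets $R_{tot}(I,\boldsymbol a)=\prod_{i=1}^{n}R_i(I,a_i)$. By the preceding strategy-consistency theorem, whenever Eq.~(\ref{decomposition}) holds the induced joint strategy equals $\prod_i\sigma_i(I,a_i)$, so feeding each agent the counterfactual values computed from the sampled joint strategy and updating the $R_i$ is exactly an instance of CFR on the team's information sets in which agent $i$ has action set $A_{\mathbb{T}_i}$. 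Hence the per-infoset cumulative regret of each agent obeys the standard regret-matching$^+$ guarantee $R^T(I)\le\Delta\sqrt{|A|}\sqrt{T}$, and summing over the $|I_p|$ information sets of player $p$ produces the deterministic leading term $\Delta|I_p|\sqrt{|A|}\sqrt{T}$.

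Next I would account for the two perturbations. The multiplicative factor $(1+\tfrac{\sqrt{2}}{\sqrt{\rho K}})$ comes from the probe-sampling traversal: averaging $K$ independent traversals per iteration and applying a Chebyshev/Markov-type concentration argument, exactly as in the MCCFR and Deep CFR analyses, shows that with probability at least $1-\rho$ the sampled counterfactual regrets deviate from their expectations by no more than a $\tfrac{\sqrt{2}}{\sqrt{\rho K}}$ fraction of the deterministic bound. The additive term $4T|I_p|\sqrt{|A|\Delta\epsilon_{\boldsymbol L}}$ comes from the network approximation error: if the average MSE of $\mathcal{R}_p(I,a\mid\theta^t)$ on $M_{r,p}$ exceeds the best achievable loss by at most $\epsilon_{\boldsymbol L}$, then acting on the approximate rather than the exact cumulative regrets costs at most $O(\sqrt{|A|\Delta\epsilon_{\boldsymbol L}})$ extra regret per information set per iteration (the regret-matching robustness lemma of Deep CFR), which accumulates linearly in $T$. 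Adding the three contributions gives the displayed bound; dividing by $T$ and letting $T\to\infty$ annihilates the $\sqrt{T}/T$ terms and leaves the asymptotic bound $4|\mathcal{I}_p|\sqrt{|A|\Delta\epsilon_{\boldsymbol L}}$ with high probability.

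The step I expect to be the main obstacle is the first one: rigorously showing that the mixing layer does not change the action-space dependence of the bound, i.e.\ that the governing quantity is the individual action count and not the exponential joint action count $\prod_i|A_{\mathbb{T}_i}|$. This requires verifying (a) that the targets used to train the Agent network are genuine individual regrets consistent with a bona fide regret-matching$^+$ process on each agent's information set, and (b) that the MSE bound $\epsilon_{\boldsymbol L}$, which is measured on individual-action samples in $M_{r,p}$, propagates through the product mixing layer to a controlled error on the joint regrets without an exponential blow-up. One must also handle the degenerate branch $\sum_{\boldsymbol a'}R_{tot}(I,\boldsymbol a')_+\le 0$ separately, since Eq.~(\ref{eq.regret}) is only an identity of the normalized regret-matching$^+$ strategies, falling back on the uniform-strategy case of the regret-matching definition and its corresponding regret guarantee.
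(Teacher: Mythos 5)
Your proposal takes essentially the same route as the paper: reduce the claim to the Deep CFR regret bound of \cite{brown2019deep} via the strategy-consistency results (Theorems 1 and 2), keep the deterministic $\Delta|\mathcal{I}_p|\sqrt{|A|}\sqrt{T}$ term from regret matching, attach the $(1+\sqrt{2}/\sqrt{\rho K})$ factor from the $K$-traversal sampling concentration argument, and attach the $4T|\mathcal{I}_p|\sqrt{|A|\Delta\epsilon_{\boldsymbol L}}$ term from the bias--variance decomposition of the MSE training loss. The paper's appendix is a near-verbatim transcription of the Deep CFR proof with probe sampling substituted for external sampling, preceded by the one-paragraph assertion that Eq.~(\ref{decomposition}) implies Eq.~(\ref{eq: strategy_consistency}) and hence that the bound carries over unchanged. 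The obstacle you flag in your last paragraph --- verifying that the mixing layer induces a bona fide per-agent regret-matching$^+$ process whose action-space dependence is the individual count rather than the joint count $\prod_i|A_{\mathbb{T}_i}|$, and that the MSE bound $\epsilon_{\boldsymbol L}$ propagates through the product without blow-up --- is real, but be aware the paper does not close it either: its proof treats the team as a single Deep CFR player and never distinguishes which $|A|$ enters the bound, so on this point your sketch is no less complete than the published argument.
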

\begin{proof}
Compared with the proof of Deep CFR, the only difference is that the agent in the team plays the individual strategy instead of the joint action strategy. The establishment of Eq.~(\ref{decomposition}) guarantees the consistency relationship of strategies (Eq.~(\ref{eq: strategy_consistency})) which means that under the new strategy representation, the Nash Equilibrium keeps unchanged (Theorem 1). Therefore, the regret bound for the new representation is same as the joint strategy representation.\footnote{The complete proof is in the Appendix.}
% the probability of every joint action selected by the team equals the probability of actions individually selected by each agent. Therefore, it is equal to playing a joint action strategy. Then the following proof process is the same as Deep CFR.
\end{proof}
\section{Evaluation}
To verify the effectiveness of CFR-MIX, we deploy Double Neural CFR and Deep CFR as baselines and compare these baselines with CFR-MIX and Deep CFR with mixing layer. We also compare CFR-MIX with the algorithm that every agent learns its strategy independently (Individual Deep CFR). Furthermore, to verify the effectiveness of the parameter sharing technique, we evaluate the performance of CFR-MIX with and without parameter sharing. Without loss of generality, we select two different domain games. Experiments are performed on a server with a 10 core 3.3GHz Intel i9-9820X CPU and a NVIDIA RTX 2080 Ti GPU.
\subsection{Goofspiel}
\textit{Goofspiel}~\cite{ross_1971} is a bidding card game where players have a hand of cards numbered 1 to $K$, and take turns secretly bidding on the top point-valued card in a point card stack using the cards in their hands (Fig.~\ref{fig: goof}(a)). In each bidding process, the player with the higher point of the card will get the point of the top point-valued card. Finally, the player with the highest total points wins. Here, we use an imperfect informational version which is widely used~\cite{lanctot2014search,lisy2015online,brown2019solving}: players can only observe the results of each bid while cannot know the cards used to bid. Meanwhile, we consider the special two-player case in which several players form a team to play against an opponent. If one of the team players wins, the whole team wins. Here, as shown in Fig.~\ref{fig: goof}, we use different number of cards (6C, 10C and 13C) and team players (4P and 6P).
% Fig.~\ref{fig: goof} shows the experimental results. 
\begin{figure}[t]
\centering
\subfigure[Goofspiel Game]{
\includegraphics[width=0.45\columnwidth, height=83pt]{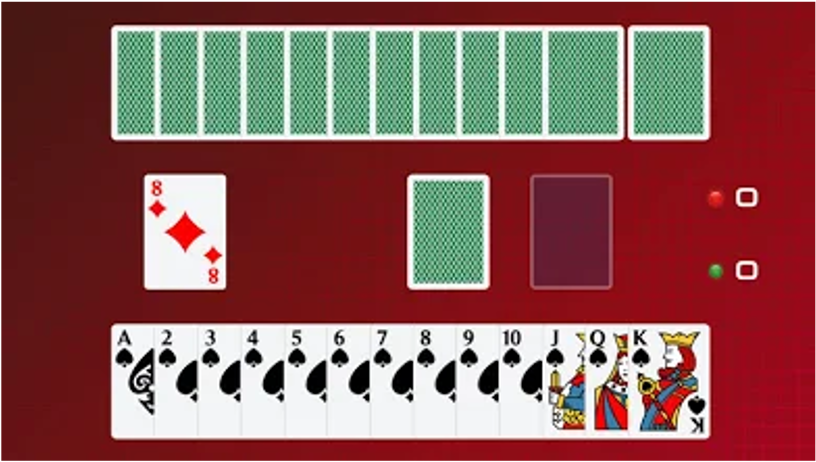}
 }
 \subfigure[6C4P6R]{
 \includegraphics[width=0.45\columnwidth]{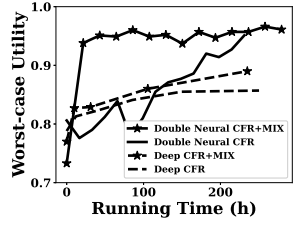}}
\subfigure[10C6P(2/3/4)R]{
\includegraphics[width=0.45\columnwidth]{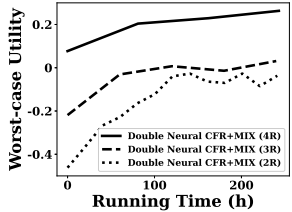}
 }
\subfigure[13C6P2R]{
 \includegraphics[width=0.45\columnwidth]{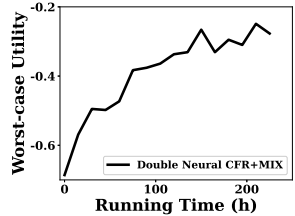}}
\caption{Goofspiel Games
(C: card, P: team player, R: round)}
\label{fig: goof} 
\end{figure}
Note that Double Neural CFR + MIX refers to CFR-MIX and Deep CFR + MIX denotes the Deep CFR algorithm with mixing layer. We can see that the algorithms with mixing layer outperform the others in 6C4P6R game. In 10C6P and 13C6P games, the joint action space is very large at some information sets (about $10^6$ and $13^6$), and only CFR-MIX can get satisfactory strategies in a limited time. 
\begin{figure}[t]
\centering
    \subfigure[3*3 grid, 1 vs 2]{
        \includegraphics[width=0.45\columnwidth]{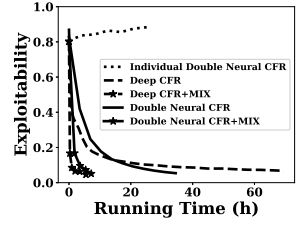}
    }
    \subfigure[5*5 grid, 1 vs 4]{
        \includegraphics[width=0.45\columnwidth]{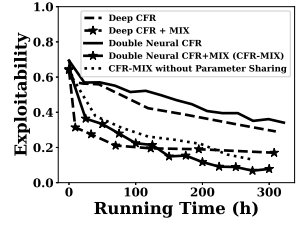}
    }
    \subfigure[15*15 grid, 1 vs 4]{
        \includegraphics[width=0.45\columnwidth]{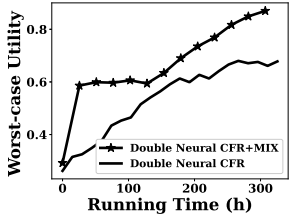}
    }
    \subfigure[15*15 grid, 1 vs 8]{
        \includegraphics[width=0.47\columnwidth]{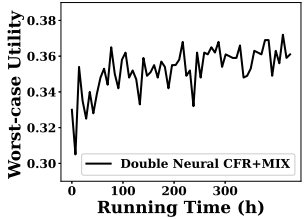}
    }

\caption{NEST Games}
\label{fig: nest} 
\end{figure}
\subsection{Pursuit-Evasion game}
Pursuit-evasion games appear in many scenarios in robotics and security domains, where a team of pursuers aims to capture the evader, while the evader aims for the opposite~\cite{horak2016dynamic}. Here we consider a more realistic version of the pursuit-evasion game, \textit{NEtwork purSuiT game (NEST)}~\cite{zhang2019optimal}, which takes the tracking devices into consideration. Thus, pursuers can get the real-time location of the evader. We set a limited number of steps in which if the evader cannot escape to the exits or at least one pursuer catches the evader, the game ends and the defender gets a unit reward. Otherwise, the defender gets a zero payoff. Our experiments are conducted on grid networks in which each node connects to its neighbors. We choose some nodes on the edges as exits. The initial locations of the evader and all pursuers are chosen randomly. 
As shown in Fig.~\ref{fig: nest}(a-c), the algorithms with the mixing layer outperform other algorithms. In Fig.~\ref{fig: nest}(a), we also compare our algorithm with the algorithm that every pursuer uses Double Neural CFR independently. The result shows that Individual Double Neural CFR cannot converge to NE in a limited time. To verify the effectiveness of parameter sharing, we conduct an ablation experiment in which every pursuer uses different Agent neural networks. The results in Fig.~\ref{fig: nest}(b) indicate that CFR-MIX with parameter sharing performs better than that without parameter sharing. In Fig.~\ref{fig: nest}(c), we only compare CFR-MIX with Double Neural CFR because Deep CFR is time-consuming and needs large memory. We can see that CFR-MIX still exhibits better performance. In complicated games with large number of pursuers (Fig.~\ref{fig: nest}(d), although CFR-MIX does not converge in a limited time, it still shows the tendency of growth which means that CFR-MIX get a better strategy than random strategy (initial strategy). In conclusion, for different domains games, CFR-MIX outperforms other algorithms significantly. In large games with many players, CFR-MIX can still get some satisfactory results within a limited time while other algorithms cannot solve these large games due to the combinatorial action space.

\section{Conclusion}
In this paper, we propose a novel framework to solve the Team-Adversary Games with the combinatorial action space. Instead of using the joint action strategy representation, we represent the team's strategy with a novel individual strategy representation to reduce the strategy space. To maintain the cooperation among team players, we define a consistency relationship between these two strategy representations. Moreover, to compute the equilibrium with new strategy representation, we transform the consistency relationship between strategies to the consistency relationship between cumulative regret values. Finally, under the guarantee of these consistency, we propose the CFR-MIX framework which employs a mixing layer to implement a strategy decomposition method. Experimental results show that our algorithm significantly outperforms other state-of-the-art CFR-based algorithms. 
\section*{Acknowledgments}
This research is partially supported by Singtel Cognitive and Artificial Intelligence Lab for Enterprises (SCALE@NTU), which is a collaboration between Singapore Telecommunications Limited (Singtel) and Nanyang Technological University (NTU) that is funded by the Singapore Government through the Industry Alignment Fund – Industry Collaboration Projects Grant.
% \clearpage

\balance
\bibliographystyle{named} 
\bibliography{references}

\clearpage
\begin{appendices}

\onecolumn
\section{Proofs of the Theorems} \label{appendix_a}
\textbf{Proof of Theorem 1}
\begin{proof}
Let $\sigma = (\sigma_\mathbb{V}, \boldsymbol{\sigma}_\mathbb{T})$ be the Nash equilibrium strategy profile and $\sigma_i(I, a_i)$ be the strategy of agent $i$. We can know that
\begin{align}
    u_{\mathbb{V}}(\sigma) \geq \max_{\sigma^{'}_\mathbb{V} \in \sum_\mathbb{V}} u_\mathbb{V}(\sigma^{'}_\mathbb{V}, \boldsymbol{\sigma}_\mathbb{T}), && u_\mathbb{T}(\sigma) \geq \max_{\boldsymbol{\sigma}^{'}_\mathbb{T} \in \sum_\mathbb{T}} u_\mathbb{V}(\sigma_\mathbb{V}, \boldsymbol{\sigma}^{'}_\mathbb{T}). \nonumber
\end{align}
From the consistency relationship between the individual action strategy and the joint action strategy, we can know that the team's equilibrium strategy satisfies the consistency relationship. Therefore, $\boldsymbol{\sigma}_\mathbb{T} = \sigma_1\sigma_2...\sigma_n$. Then, we have
\begin{align}
    u_{\mathbb{V}}(\sigma) \geq \max_{\sigma^{'}_\mathbb{V} \in \sum_\mathbb{V}} u_\mathbb{V}(\sigma^{'}_\mathbb{V}, \sigma_1\sigma_2...\sigma_n), \nonumber
    \end{align}
\begin{align}
    u_\mathbb{T}(\sigma) \geq \max_{\sigma_i^{'} \in \sum_i} u_\mathbb{V}(\sigma_\mathbb{V}, \sigma_1^{'}\sigma_2^{'}...\sigma_n^{'}). \nonumber
\end{align}
Therefore, if $\sigma = (\sigma_\mathbb{V}, \boldsymbol{\sigma}_\mathbb{T})$ is a Nash equilibrium strategy profile, then $\sigma = (\sigma_\mathbb{V},\sigma_1\sigma_2...\sigma_n)$ is the same as the Nash equilibrium strategy profile and vice versa.
\end{proof}

\noindent\textbf{Proof of Theorem 3} \cite{brown2019deep} 
\begin{proof}
We know that the establishment of Eq.(4) guarantees the consistency relationship between strategies (Theorem 2). Therefore, under Theorem 1, we know that under the new strategy representation, the Nash Equilibrium strategy keeps unchanged. Then, we can get that the regret bound for the joint strategy representation is same as the regret bound for the individual strategy representation. The another difference is that we use probe sampling algorithm and Deep CFR uses the external sampling algorithm. The following proof is similar as the proof Theorem 1 in \cite{brown2019deep} which provides the regret bound for the joint strategy representation. Here we gives a simple proof process and readers can also refer to \cite{brown2019deep} for details.  

Assume that an online learning scheme plays strategy as follows:
\begin{equation} \label{equa:strategy}
\sigma^t(I, a)=
\left\{
\begin{aligned}
\frac{y^{+}_{t}(I, a)}{\sum_a y_t^{+}(I, a)},& \quad \text{if} \quad \sum_a y_t^{+}(I, a) > 0\\
\text{arbitrary value},& \quad \text{otherwise},
\end{aligned}
\right.
\end{equation}

Corollary 3.0.6~\cite{morrill2016using} provides the upper bound of the total regret by leveraging a function of the L2 distance between $y_t^{+}$ and $R^{T,+}$ on each infoset:
\begin{align}
    \max_{a \in A}(R^T(I, a))^2 &\leq |A|\Delta^2T + 4\Delta |A| \sum_{t=1}^T\sum_{a \in A} \sqrt{(R^t_+(I, a)-y^t_+(I, a))^2} \\
    &\leq |A|\Delta^2T + 4\Delta |A| \sum_{t=1}^T\sum_{a \in A} \sqrt{(R^t(I, a)-y^t(I, a))^2} \nonumber
\end{align}
As shown in Equation \ref{equa:strategy}, $\sigma^t(I, a)$ is invariant to rescaling across all actions at an infoset, it's also the case that for any $C(I) > 0$
% Since $\sigma^t(I, a)$ from the above equation is invariant to rescaling across all actions at an infoset, it's also the case that for any $C(I) > 0$
\begin{equation}
     \max_{a \in A}(R^T(I, a))^2 \leq |A|\Delta^2T + 4\Delta |A| \sum_{t=1}^T\sum_{a \in A}  
     \sqrt{(R^t(I, a)-C(I)y^t(I, a))^2}
\end{equation}
Let $x^t(I)$ be an indicator variable which is 1 if $I$ was traversed on iteration $t$. If $I$ was traversed then $\tilde{r}^t(I)$ was stored in $M_{V,p}$, otherwise $\tilde{r}^t(I) = 0$. Assume for now that $M_{V,p}$ is not full, so all sampled regrets are stored in the memory.

Let $\Pi^t$ be the fraction of iterations on which $x^t(I) = 1$, and let
\begin{equation}
    \epsilon^t(I) = ||E_t[\tilde{r}^t(I)|x^t(I)=1] - V(I, a|\theta^t)||_2
\end{equation}
Inserting canceling factor of $\sum_{t^{'}=1}^t x^{t^{'}}(I)$ and setting $C(I) = \sum_{t^{'}=1}^t x^{t^{'}}(I)$,
\begin{align}
    \max_{a \in A}(\tilde{R}^T(I, a))^2 &\leq |A|\Delta^2T + 4\Delta |A|\sum_{t=1}^T (\sum_{t^{'}=1}^t x^{t^{'}}(I))\sum_{a \in A} \sqrt{(\frac{\tilde{R}^t(I, a)}{\sum_{t^{'}=1}^t x^{t^{'}}(I)}-y^t(I, a))^2} \\
    &= |A|\Delta^2T + 4\Delta |A|\sum_{t=1}^T (\sum_{t^{'}=1}^t x^{t^{'}}(I)) ||E_t[\tilde{r}^t(I)|x^t(I) = 1] - V(I, a|\theta^t)||_2 \nonumber \\
    &= |A|\Delta^2T + 4\Delta |A|\sum_{t=1}^T t\Pi^t(I)\epsilon^t(I) \nonumber \\
    &\leq |A|\Delta^2T + 4\Delta |A|T\sum_{t=1}^T \Pi^t(I)\epsilon^t(I) \nonumber
\end{align}
The first term of this expression is the same as the regret bound of tabular CFR algorithm, while the second term accounts for the approximation error. In \cite{brown2019deep}, Theorem 3 shows the regret bound for $K$-external sampling, for the case of $K$-probe sampling, we can get the same results. Thus, we can get
\begin{equation}
    \max_{a \in A}(\tilde{R}^T(I, a))^2 \leq |A|\Delta^2TK^2 + 4\Delta |A|TK^2\sum_{t=1}^T \Pi^t(I)\epsilon^t(I)
\end{equation}
in this case. Following the same derivation as ~\cite{lanctot2013monte} Theorem 3, the above regret bound can lead to the bound of average regret
\begin{align}
    \overline{R}^T_p \leq \sum_{I \in \mathcal{I}_p} ((1+\frac{\sqrt{2}}{\sqrt{\rho K}})\Delta\frac{\sqrt{|A|}}{\sqrt{T}} + \frac{4}{\sqrt{T}}\sqrt{|A|\Delta \sum_{t=1}^T\Pi^t(I)\epsilon^t(I)})
\end{align}
Simplifying the first term and rearranging,
\begin{align}
    \overline{R}^T_p &\leq (1+\frac{\sqrt{2}}{\sqrt{\rho K}})|\mathcal{I}_p|\Delta \frac{\sqrt{|A|}}{\sqrt{T}} + \frac{4\sqrt{|A|\Delta}}{\sqrt{T}}\sum_{I \in \mathcal{I}_p} \sqrt{\sum_{t=1}^T\Pi^t(I)\epsilon^t(I)}) \\
    &= (1+\frac{\sqrt{2}}{\sqrt{\rho K}})|\mathcal{I}_p|\Delta \frac{\sqrt{|A|}}{\sqrt{T}} + \frac{4\sqrt{|A|\Delta}}{\sqrt{T}}|\mathcal{I}_p| \frac{\sum_{I \in \mathcal{I}_p}\sqrt{\sum_{t=1}^T\Pi^t(I)\epsilon^t(I)}}{|\mathcal{I}_p|} \nonumber \\
    &\leq (1+\frac{\sqrt{2}}{\sqrt{\rho K}})|\mathcal{I}_p|\Delta \frac{\sqrt{|A|}}{\sqrt{T}} + \frac{4\sqrt{|A|\Delta|\mathcal{I}_p|}}{\sqrt{T}} \sqrt{\sum_{t=1}^T\sum_{I \in \mathcal{I}_p}\Pi^t(I)\epsilon^t(I)} \nonumber
\end{align}
Now, let's consider the average MSE loss $\boldsymbol{L}_\mathcal{R}^T(\boldsymbol{M}_r^T)$ at time $T$ over the samples in memory $\boldsymbol{M}_r^T$. We start by stating two well-known lemmas:

1. The MSE can be decomposed into bias and variance components
\begin{equation}
    E_x[(x - \theta)^2] = (\theta - E[x])^2 + Var(\theta)
\end{equation}

2. The mean of a random variable minimizes the MSE loss  $\arg\min_{\theta}E_x[(x-\theta)^2] = E[x]$ and the value of the loss when $\theta = E[x]$ is $Var(x)$.

\begin{align}
\boldsymbol{L}_\mathcal{R}^T &= \frac{1}{\sum_{I \in \mathcal{I}_p}\sum_{t=1}^T x^t(I)} \sum_{I \in \mathcal{I}_p}\sum_{t=1}^T x^t(I)||\tilde{r}^t(I)-\mathcal{R}(I|\theta^T)||_2^2 \\
&\geq \frac{1}{|\mathcal{I}_p|T}  \sum_{I \in \mathcal{I}_p}\sum_{t=1}^T x^t(I)||\tilde{r}^t(I)-\mathcal{R}(I|\theta^T)||_2^2 \nonumber \\
&= \frac{1}{|\mathcal{I}_p|T} \sum_{I \in  \mathcal{I}_p} \Pi^T(I)E_t[||\tilde{r}^t(I)-\mathcal{R}(I|\theta^T)||_2^2|x^t(I)=1] \nonumber
\end{align}
Let $\mathcal{R}^*$ be the model that minimizes $\boldsymbol{L}^T$ on $\boldsymbol{M}_r^T$. Using above two lemmas,
\begin{align}
    \boldsymbol{L}_\mathcal{R}^T \geq \frac{1}{|\mathcal{I}_p|T} \sum_{I \in \mathcal{I}_p} \Pi^T(I) (||\mathcal{R}(I|\theta^T) - E_t[\tilde{r}^t(I)|x^t(I)=1]||_2^2 + \boldsymbol{L}_{\mathcal{R}^*}^T).
\end{align}
Thus, 
\begin{align}
     \boldsymbol{L}_\mathcal{R}^T - \boldsymbol{L}_{\mathcal{R}^*}^T \geq \frac{1}{|\mathcal{I}_p|} \sum_{I \in \mathcal{I}_p} \Pi^T(I) \epsilon^T(I) \\
     \sum_{I \in \mathcal{I}_p} \Pi^T(I) \epsilon^T(I) \leq \mathcal{I}_p|(\boldsymbol{L}_\mathcal{R}^T - \boldsymbol{L}_{\mathcal{R}^*}^T)
\end{align}
\begin{align}
    \overline{R}^T_p 
    &\leq (1+\frac{\sqrt{2}}{\sqrt{\rho K}})|\mathcal{I}_p|\Delta \frac{\sqrt{|A|}}{\sqrt{T}} + \frac{4\sqrt{|A|\Delta|\mathcal{I}_p|}}{\sqrt{T}} \sqrt{\sum_{t=1}^T\sum_{I \in \mathcal{I}_p}\Pi^t(I)\epsilon^t(I)} \\
    &\leq (1+\frac{\sqrt{2}}{\sqrt{\rho K}})|\mathcal{I}_p|\Delta \frac{\sqrt{|A|}}{\sqrt{T}} + \frac{4\sqrt{|A|\Delta|\mathcal{I}_p|}}{\sqrt{T}}\sqrt{\sum_{t=1}^T|\mathcal{I}_p|(\boldsymbol{L}_\mathcal{R}^T - \boldsymbol{L}_{\mathcal{R}^*}^T)} \nonumber \\
    &\leq (1+\frac{\sqrt{2}}{\sqrt{\rho K}})|\mathcal{I}_p|\Delta \frac{\sqrt{|A|}}{\sqrt{T}} + \frac{4\sqrt{|A|\Delta|\mathcal{I}_p|}}{\sqrt{T}}\sqrt{T|\mathcal{I}_p|\epsilon_{\boldsymbol{L}}} \nonumber \\
    &= (1+\frac{\sqrt{2}}{\sqrt{\rho K}})|\mathcal{I}_p|\Delta \frac{\sqrt{|A|}}{\sqrt{T}} + 4|\mathcal{I}_p|\sqrt{|A|\Delta\epsilon_{\boldsymbol{L}}} \nonumber
\end{align}
So far we have assumed that $\boldsymbol{M}_{r}$ contains all sampled regrets. The number of samples in the memory at iteration $t$ is bounded by $K \cdot |\mathcal{I}_p| \cdot t$. Therefore, if $K \cdot |\mathcal{I}_p| \cdot T < |\boldsymbol{M}_r|$ then the memory will never be full, and we can make this assumption. 

Let $\rho = T^{-\frac{1}{4}}$.
\begin{align}
    P( \overline{R}^T_p > (1+\frac{\sqrt{2}}{\sqrt{T^{-\frac{1}{4}} K}})|\mathcal{I}_p|\Delta \frac{\sqrt{|A|}}{\sqrt{T}} + 4|\mathcal{I}_p|\sqrt{|A|\Delta\epsilon_{\boldsymbol{L}}}) < T^{-\frac{1}{4}}
\end{align}
Therefore, for any $\epsilon > 0$,
\begin{align}
    \lim_{T \rightarrow \infty} P( \overline{R}^T_p - 4|\mathcal{I}_p|\sqrt{|A|\Delta\epsilon_{\boldsymbol{L}}} > \epsilon) = 0 
\end{align}
\end{proof}

\clearpage

\clearpage

\section{CFR-MIX Framework}

\begin{algorithm*}[ht]
\caption{CFR-MIX framework}
\begin{algorithmic}[1]
\label{algo: deepcfr}
\STATE Initialize cumulative regret network $R(I, a|\theta_p)$ with $\theta_p$ so that it returns 0 for all inputs for player $p \in \{\mathbb{V}, \mathbb{T}\}$;
\STATE Initialize average strategy network $S(I, a|\theta_{\pi, p})$ for player $p \in \{\mathbb{V}, \mathbb{T}\}$;
\STATE Initialize regret memories $M_{r,\mathbb{V}}$,$M_{r,\mathbb{T}}$ and strategy memory $M_{\pi,\mathbb{V}}$, $M_{\pi,\mathbb{T}}$.
\FOR{CFR Iteration t=1 to T}
% \FOR{player $p$ in $\{\mathbb{V}, \mathbb{T}\}$}
\FOR{traverse k=1 to K}
\STATE TRVERSE($\phi, \mathbb{V}$, $\theta_\mathbb{V}$, $\theta_\mathbb{T}$,$M_{r,\mathbb{V}}$, $M_{\pi, \mathbb{T}}, t$)
\STATE TRVERSE($\phi, \mathbb{T}$, $\theta_\mathbb{T}$, $\theta_\mathbb{V}$,$M_{r,\mathbb{T}}$, $M_{\pi, \mathbb{V}}, t$) \\
\#use sample algorithm to traverse game tree and record regret and strategy into memory
\ENDFOR
% \ENDFOR
\STATE Train $\theta_p$ on loss for player $p \in \{\mathbb{V}, \mathbb{T}\}$ \\ $\mathcal{L} = \mathbb{E}_{(I, \widetilde{r}) \sim M_{r, p}}[\sum_a ((R(\cdot|\theta_p^t)+\widetilde{r})^+-R(\cdot|\theta_p^{t+1}))^2]$
\STATE Train $\theta_{\pi, p}$ on loss for player $p \in \{\mathbb{V}, \mathbb{T}\}$ \\ $\mathcal{L} = \mathbb{E}_{(I, \widetilde{\pi}) \sim M_{\pi, p}}[\sum_a ((S(\cdot|\theta_p^t) + \widetilde{\pi})^+- S(\cdot|\theta_p^{t+1}))^2]$
\ENDFOR
\RETURN $\theta_{\pi,\mathbb{V}}$, $\theta_{\pi, \mathbb{T}}$
\end{algorithmic}
\end{algorithm*}

\begin{algorithm*}[ht]
\caption{TRVERSE}
\begin{algorithmic}[1]
\label{traverse}
\STATE \textbf{Function:}TRVERSE($h, p,$ $\theta_p$, $\theta_{-p}$,$M_{r,p}$, $M_{\pi, (-p)}$)
\IF {$h \in Z$}
\RETURN $u_i(h)$
\ELSIF{$h$ is a chance node}
\STATE Sample an action $a$ from the probability $\sigma_c(h)$;
\RETURN TRVERSE($ha, p,$ $\theta_p$, $\theta_{-p}$,$M_{r,p}$, $M_{\pi, (-p)}$)
\ELSIF{$P(h) = p$}
\STATE $I\leftarrow$ Information set containing $h$;
\STATE $\sigma^t(I) \leftarrow$ Strategy of Information set $I$ computed from $R(I, a|\theta_p)$ using regret matching+;
\STATE Sample an action $a^*$ with the probability $1/|A(I)|$ of each action 
\FOR{$a \in A(I)$}
\IF {$a = a^*$}
\STATE $u(a) \leftarrow$ TRVERSE($ha^*, p,$ $\theta_p$, $\theta_{-p}$,$M_{r,p}$, $M_{\pi, (-p)}$)
\ELSE
\STATE $u(a) \leftarrow$ PROBE($ha, p,$ $\theta_p$, $\theta_{-p}$,$M_{r,p}$, $M_{\pi, (-p)}$) 
\ENDIF
\ENDFOR
\STATE $u_{\sigma^t} \leftarrow \sum_{a \in A(I)} \sigma^t(I,a)u(a)$
\FOR{$a \in A(I)$}
\STATE $r(I, a) \leftarrow u(a) - u_{\sigma^t}$
\ENDFOR
\STATE Insert the infoset and its action regret values $(I, t, r(I))$ into regret memory $M_{r,p}$
\RETURN $u_{\sigma^t}$
\ELSE
\STATE $I\leftarrow$ Information set containing $h$;
\STATE $\sigma^t(I) \leftarrow$ Strategy of Information set $I$ computed from $R(I,a|\theta_{-p})$ using regret matching+;
\STATE Insert the infoset and its strategy $(I, t, \sigma^t(I))$ into strategy memory $M_{\pi, (-p)}$;
\STATE Sample an action $a$ from the probability distribution $\sigma^t(I)$;
\RETURN TRVERSE($ha, p,$ $\theta_p$, $\theta_{-p}$,$M_{r,p}$, $M_{\pi, (-p)}$)
\ENDIF
\end{algorithmic}
\end{algorithm*}

\begin{algorithm*}[t]
\caption{PROBE}
\begin{algorithmic}[1]
\label{probe}
\STATE \textbf{Function:}PROBE($h, p,$ $\theta_p$, $\theta_{-p}$,$M_{r,p}$, $M_{\pi, (-p)}$)
\IF {$h \in Z$}
\RETURN $u_i(h)$
\ELSIF{$h$ is a chance node}
\STATE Sample an action $a$ from the probability $\sigma_c(h)$;
\ELSE
\STATE $I\leftarrow$ Information set containing $h$;
\STATE $i \leftarrow$ the player who takes an action at the history $h$; 
\STATE $\sigma^t(I) \leftarrow$ Strategy of Information set $I$ computed from $R(I, a|\theta_i)$ using regret matching+;
\STATE Sample an action $a$ from the probability distribution $\sigma^t(I)$;
\ENDIF
\RETURN PROBE($ha, p,$ $\theta_p$, $\theta_{-p}$,$M_{r,p}$, $M_{\pi, (-p)}$)
\end{algorithmic}
\end{algorithm*}

\end{appendices}
\end{document}